\def\BibTeX{{\rm B\kern-.05em{\sc i\kern-.025em b}\kern-.08em
    T\kern-.1667em\lower.7ex\hbox{E}\kern-.125emX}}
\newcolumntype{L}[1]{>{\raggedright\arraybackslash}p{#1}}
\algnewcommand{\algorithmicor}{\textbf{ or }}
\algnewcommand{\OR}{\algorithmicor}
\crefname{equation}{}{}
\Crefname{equation}{}{}
\crefname{figure}{Fig.}{Fig.}
\newcommand{\ie}{\textit{i}.\textit{e}., }
\newcommand{\R}{\mathbb{R}}
\definecolor{brickred}{rgb}{0.8, 0.25, 0.33}
\newcommand{\rebuttal}[1]{{\color{black} #1}}
\newtheorem{theorem}{Theorem}
\newtheorem{remark}{Remark}
\newtheorem{definition}{Definition}
\newtheorem{problem}{Problem}
\Crefname{asm}{Assumption}{Assumption}
\newcommand{\revision}[1]{{\color{black} #1}}
\DeclareMathOperator*{\argmin}{arg\,min}
\title{\LARGE \bf
Safety Index Synthesis via Sum-of-Squares Programming
}
\author{Weiye Zhao$^1$, Tairan He$^1$, Tianhao Wei, Simin Liu and Changliu Liu
\thanks{1. Equal contribution.}
\thanks{W. Zhao, T. He, T. Wei, S. Liu and C. Liu are with the Robotics Institute, Carnegie
Mellon University, Pittsburgh, PA 15213 USA (e-mail: {\tt\small weiyezha, tairanh, twei2, siminliu, cliu6@andrew.cmu.edu}).}%
\thanks{This material is based upon work supported by the National Science Foundation under Grant No. 2144489.}
}
\begin{document}

\maketitle
\thispagestyle{empty}
\pagestyle{empty}

\begin{abstract}
Control systems often need to satisfy strict safety requirements. 
Safety index provides a handy way to evaluate the safety level of the system and derive the resulting safe control policies.
However, designing safety index functions under control limits is difficult and requires a great amount of expert knowledge.
This paper proposes a framework for synthesizing the safety index for general control systems using sum-of-squares programming. Our approach is to show that ensuring the non-emptiness of safe control on the safe set boundary is equivalent to a local manifold positiveness problem. 
We then prove that this problem is equivalent to sum-of-squares programming via the Positivstellensatz of algebraic geometry. 
We validate the proposed method on robot arms with different degrees of freedom and ground vehicles. The results show that the synthesized safety index guarantees safety and our method is effective even in high-dimensional robot systems.
\end{abstract}


\section{Introduction}
Energy-function-based algorithms~\cite{wei2019safe} have been widely studied as appealing tools for safe control. When properly designed, the energy function (also called the safety index or barrier function) maps dangerous states to high values and safe states to low values. Then, safety can be ensured by finding a control input that dissipates the energy. However, when control limits exist, this desired safe control may not be realizable and thus the safety guarantee may be broken. Therefore, it is important to account for control limits in energy function design.

Accounting for control limits in safety index design is challenging, because we have to ensure the feasibility of safe control at \textit{infinitely many} states. This feasibility constraint itself can be quite complex, as it is a function of the control limits, system dynamics, and safety requirements. There have been various approaches to safety index synthesis under control limits. Some works assume that the dynamical system has a special structure (i.e. kinematic bicycle, Euler-Lagrange), which enables hand-derivation of a safety index~\cite{zhao2021issa, cortez2020safe}. Another approach is to formulate synthesis as optimizing parameters within some safety index function form~\cite{clark2021verification,wei2022safe}. However, these methods do not scale well to high-dimensional systems. 

Sum-of-Squares Programming (SOSP) has been applied to safety index synthesis~\cite{clark2021verification,ames2019control}, as it is a powerful tool for dealing with  optimization problems with infinitely many constraints. The issue with existing SOSP-based synthesis techniques is that they try to enforce existence of safe control for \textit{all states in the state space}~\cite{ames2019control,clark2021verification}. This is otherwise known as enforcing a \textit{global positiveness} constraint~\cite{zhao2022provably}. 

In this work, we highlight that instead of satisfying \textit{global positiveness} constraints, it is sufficient for the safety index to satisfy \textit{local manifold positiveness} constraints, which enables the corresponding safety index design to be much less conservative. That means we can find solutions (a valid safety index) more often. Subsequently, we propose a general, scalable, efficient SOSP-based method to transform
safety index synthesis problem into a nonlinear programming problem that can be solved efficiently by off-the-shelf solvers. 


The experiments show that the synthesized safety index ensures the existence of safe control and that our method is time-efficient, even for high-dimensional robotic systems.

\textbf{Our contributions} We propose a novel SOSP-based method that can efficiently synthesize a safety index. The state-of-the-art SOSP-based methods are (i) limited to systems with polynomial or sinusoidal nonlinearities~\cite{ames2019control}; (ii) hard to scale to high-dimensional systems (e.g., iteratively solving $\mathcal{O}(2k)$ SOS programs at $k$-th iteration~\cite{clark2021verification}); (iii) limited to conservative solutions due to considering \textit{global positiveness} constraints. Our algorithm (i) is applicable to polynomial substitutable (Def.~\ref{def: subs}) control systems; (ii) scales well to high-dimensional systems (solving only one nonlinear programming) and (iii) gives nonconservative solutions via solving \textit{local manifold positiveness} constraints. 

\rebuttal{In the remainder of the paper, we first discuss related work about energy-function-based methods and safe control in \Cref{sec: related_work}. We then formulate the mathematical problem for safety index synthesis in \Cref{sec: formulation}. In \Cref{sec: method} and \Cref{sec:proof}, we first introduce the proposed optimization
algorithm based on SOSP, and provide theoretical results that the algorithm can obtain a feasible safety index design. Finally, we validate our proposed method in high-dimensional robot systems in \Cref{sec: exp}. Our code is available on Github.\footnote{\url{https://github.com/intelligent-control-lab/Safety-Index-Synthesis-via-Sum-of-Squares-Programming}}
} 

\section{Related Work}\label{sec: related_work}
In literature, many different energy functions~\cite{wei2019safe} are proposed to measure safety, including (i) potential function; 
(ii) safety index;
(iii) control barrier function. Representative methods include potential field method,
barrier function method~\cite{liu2022safe}, and safe set algorithm~\cite{liu2014control}. 

However, designing such energy functions is difficult and requires great human efforts to find appropriate parameters and function forms. 
As for synthesizing \textit{safety index}, 
\rebuttal{
\cite{liu2014control} discusses the general safety index design rule that guarantees forward invariance of safety. However, the safety guarantee comes from the assumption of unbounded control input.}
\cite{zhao2021issa} solves the problem based on worst-case analysis, but it is limited to simple 2D mobile robot dynamics. \cite{wei2022safe} leverages an evolutionary algorithm for safety index parameterization, but the computation time of grid sampling increases exponentially with the dimension of the state space.
\rebuttal{\cite{ma2022joint} proposes the joint synthesis of safety index and safe control policy, such that safe control can be generated for all time steps. However, the safety index is optimized via gradient descent, where only local optima convergence is guaranteed under strong assumptions.
}

As for synthesizing \textit{control barrier functions} (CBFs), safety index is closely related to CBFs as analyzed in~\cite{wei2019safe}. Classical hand-designed CBFs
are hard to scale to nonlinear or high-dimensional systems.
\rebuttal{\cite{liu2022safe} proposes synthesis of neural CBF that takes account of the control limits. The neural CBF is learnt via iterative optimization to eliminate the counterexamples of input saturation. However, such adversarial optimization is prone to local optima and no theoretical guarantee is provided. 
}
Some automated methods
~\cite{clark2021verification} 
leverage SOSP to synthesize CBFs. However, \cite{clark2021verification} is limited to systems with polynomial dynamics. 
Existing techniques for synthesizing barrier certificates using SOSP are inapplicable to design CBFs~\cite{clark2021control}. To this end, ~\cite{clark2021verification} develop a SOSP-based program via Positivstellensatz to verify CBFs, but it is also computationally challenging for high-dimensional systems.
To designs certificate functions in complex systems, another line of work~(\cite{liu2022safe, dawson2022safe}) proposes to learn approximate CBFs with neural networks, but these methods lack theoretical guarantees of safety in practice.
\rebuttal{CBVFs~\cite{choi2021robust} unifies CBFs and Hamilton-Jocobi reachability, handling bounded control and disturbances. However, this comes with a cost of bearing the curse of dimensionality.}


\section{Problem Formulation} \label{sec: formulation}

This section introduces system dynamics and safety specification, and formulates the safety index synthesis problem.

\paragraph*{Dynamics}
 Let $x \in \mathcal{X}\subset \mathbb{R}^{n_x}$ be the robot state, where $n_x$ is the dimension of the bounded state space $\mathcal{X}$, \rebuttal{and $x \in \mathcal{X}$ can  be represented by a set of inequalities $\{S(x)_i \geq 0\}, i = 1,2,\cdots,n_S$.} 
 
 Let $u \in \mathcal{U}\subset \mathbb{R}^{n_u}$ be the control input to the robot, where $n_u$ is the dimension of the control space $\mathcal{U}$, and $\mathcal{U}$ is bounded by a $n_u$-dimensional orthotope, i.e. 
\begin{align}
    \forall i = 1,2,\cdots,n_u, \, \mathcal{U}^{[i]} \in [u^{[i]}_{min}, u^{[i]}_{max}],
\end{align}
where $\mathcal{U}^{[i]}$ denotes the $i$-th dimensional control space, and $u^{[i]}_{min}, u^{[i]}_{max} \in \mathbb{R}$. In this paper, we consider a control-affine system, and the system dynamics are defined as: 
\begin{equation}\label{eq:dynamics fn}
\begin{split}
    &\dot{x} = f(x) + g(x) u, \\
\end{split}
\end{equation}
where $f: \R^{n_x} \to \R^{n_x}$, $g: \R^{n_x} \to \R^{n_x \times n_u}$ are locally Lipschitz continuous on $\R^{n_x}$. Here we highlight that $f(x), g(x)$ are not necessarily polynomials, and they satisfy the following definition:


\begin{definition}(Substitutable Function)
\label{def: subs}
A function is substitutable if its variables can be projected into the higher dimensional polynomial manifold, such that the function can be rewritten as polynomials.
\end{definition}

\begin{remark}
To better understand \Cref{def: subs}, we give an example of a substitutable function. Consider function
\begin{align*}
    f^*(\theta) := \sin\theta \cos\theta,
\end{align*}
by substituting $\sin\theta$ with $\mathfrak{a}$, and $\cos\theta$ with $\mathfrak{b}$, respectively. $f^*(\theta)$ can be rewritten as following polynomial
\begin{align*}
    f^*(\mathfrak{a},\mathfrak{b}) := \mathfrak{a}\mathfrak{b},
\end{align*}
where $\mathfrak{a}^2 + \mathfrak{b}^2 - 1 = 0$.
\end{remark}



\paragraph*{Safety Specification}
The safety specification requires the system state should be constrained in a closed subset in the state space, called the safe set $\mathcal{X}_S$. The safe set can be represented by the zero-sublevel set of a continuous and piecewise smooth function $\phi_0:\mathbb{R}^{n_x} \rightarrow \mathbb{R}$, i.e., $\mathcal{X}_S = \{x\mid \phi_0(x)\leq 0\}$. $\mathcal{X}_S$ and $\phi_0$ are directly specified by users. The design of $\phi_0$ is straightforward in most scenarios. For example, for collision avoidance, $\phi_0$ can be designed as the negative closest distance between the robot and environmental obstacles.

\paragraph*{Problem}
Since $\mathcal{X}_S$ 
may contain states that will inevitably go to the unsafe set no matter what the control input is, we need to assign high energy values to these inevitably-unsafe states. Then, safe control can be applied to 
ensure \textbf{forward invariance} in a subset of the safe set $\mathcal{X}_S$ by dissipating the energy.
\textit{Forward invariance} of a set means that the robot state will never leave the set if it starts from the set, i.e. when $\phi_0(x(t_0)) \leq 0$, then $\phi_0(x(t)) \leq 0,\ \forall t>t_0$.



In this paper, we adopt safe set algorithm (SSA) ~\cite{liu2014control}, which is an \textit{energy function-based method} for safe control.
SSA has introduced a rule-based approach to synthesize the energy function as a continuous, piece-wise smooth scalar function $\phi:\mathbb{R}^{n_x} \rightarrow \mathbb{R}$. And the energy function $\phi(x)$ is called a safety index. 
\rebuttal{The general form of the safety index is $\phi = \phi_0 + k_1\phi_0^{(1)} + \cdots + k_n\phi_0^{(n)}$} where 1) the roots of $1+k_1s+\ldots + k_n s^n=0$ are all negative real (to ensure zero-overshooting of the original safety constraints); 2) the relative degree from \rebuttal{n-order derivative} $\phi_0^{(n)}$ to $u$ is one (to avoid singularity). 
In our paper, we assume $\frac{\partial \phi}{\partial x}$ satisfies \Cref{def: subs}.

It is shown in \cite{liu2014control} that if the control input is unbounded ($\mathcal{U} =\mathbb{R}^{n_u}$), then there always exist a control \rebuttal{$u$} that satisfies the constraint \rebuttal{$\dot\phi(x,u) \leq 0 \text{ when } \phi= 0$}, \rebuttal{where $\dot\phi(x,u)$ denotes the derivative of $\phi(x)$ with respect to time under control $u$. For simplicity, we use $\dot \phi$ to represent $\dot \phi(x,u)$}. If the control input always satisfies that constraint, then the set $\{x\mid\phi(x)\leq 0\}\cap \{x\mid \phi_0(x)\leq 0\}$ is forward invariant.
In practice, when $\phi=0$, the safe control $u$ is computed through a quadratic projection of the nominal control $u^r$:
\begin{align}\label{eq: quadratic program for control}
    u =& \argmin_{u\in\mathcal{U}} \|u-u^r\|^2\text{ s.t. } \dot\phi \leq 0.
\end{align}

\begin{figure}[htbp]
    \centering
    \includegraphics[width=0.8\columnwidth]{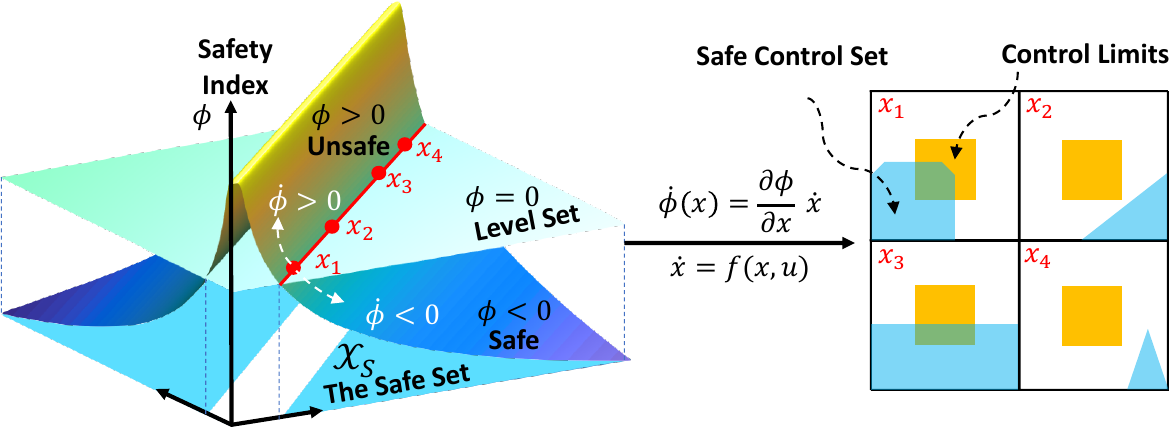}
    \caption{\rebuttal{The manifold of safety index and sets of safe control.}}
    \vspace{-15pt}    \label{fig:problem_illustration}
\end{figure}
The issue with \eqref{eq: quadratic program for control} is that this safe controller can saturate as shown in \Cref{fig:problem_illustration} where the sets of safe control may have no intersection with the limited control space. Specifically, saturation occurs when  there does not exist a control input that satisfies the constraint $\dot\phi \leq 0 \text{ when } \phi= 0$,
causing the loss of safety guarantees. This can happen because we have not yet accounted for control limits in our design of $\phi$. Hence, the core problem of this paper is to choose a proper parameterization of safety index that guarantees the existence of safe control (within control limits) to ensure $\dot\phi \leq 0 \text{ for all states where } \phi= 0$. We call this process as \textit{Safety Index Synthesis with Control Limits}. Mathematically, \textit{Safety Index Synthesis} solves the following problem:
\begin{problem}[\textit{Safety Index Synthesis}]
\label{def:problem 1}
Construct safety index as $\phi_\Theta = \phi_0 + k_1\dot\phi_0 + \cdots + k_n\phi_0^{(n)}$, with tunable parameters set $\Theta = \{k_1, k_2, \cdots, k_n\}$ and $\forall i = 1,2,\cdots,n, k_i \in \mathbb{R}^+$, such that
\begin{align}
\label{eq: problem_ori}
    \forall x \in \mathcal{X} \text{ s.t. } \phi_\Theta(x) = 0, \min_{u \in \mathcal{U}} \dot{\phi}(x, u) < 0.
\end{align}
\end{problem}
However, solving Problem \ref{def:problem 1} is not a trivial task. Note that \eqref{eq: problem_ori} contains infinitely many constraints since for every state $x, \text{s.t. } \phi_\Theta(x) = 0$, poses an inequality constraint, hence cannot be directly solved by off-the-shelf nonlinear programming solvers. To deal with this issue, we leverage tools from SOSP (to be reviewed in the following section) to repose the problem as a nonlinear program.

\section{\rebuttal{Background} of Sum of Square Programming}\label{sec:sos}

Optimization problems with \textit{global positiveness constraints}~\cite{zhao2022provably} in the form of finding a function $F(x), \text{ s.t. } \forall x\in\mathbb{R}^{n_x}, G\circ F(x)>0$, where $G$ is a function,
have been widely studied and can be solved by SOSTOOLs~\cite{zhao2022provably}.
However, our problem requires $\min_{u \in \mathcal{U}} \dot{\phi}(x, u)$ to be nonpositive on a limited-size manifold, i.e. $x\in \mathcal{X}, \phi(x) = 0$. Hence, Problem \ref{def:problem 1} cannot be directly solved by SOSTOOLS.
Instead, we will reconsider the theory behind SOSP to solve Problem \ref{def:problem 1}.

To ensure \textit{global positiveness} of a condition, the easiest way is to show that there does not exist any \revision{solution such that} the condition is violated (refute set). 
Constructing the refute set and showing it is empty is the core idea behind SOSP~\cite{parrilo2003semidefinite} to ensure \textit{global positiveness}. To show that the refute set is empty, we need to invoke the equivalence conditions in \textit{Positivstellensatz}~\cite{parrilo2003semidefinite}. 
Before introducing \textit{Positivstellensatz}, we first review a key concept: ring-theoretic \textit{cone}~\cite{parrilo2003semidefinite}.
  
  
  \rebuttal{
  \begin{definition}[Ring-theoretic cone]\label{prop}Denote $\mathbb{R}[x_1,\ldots,x_n]$ a set of polynomials with $[x_1,\ldots,x_n]$ as variables.
  For a 
  set $S = {\gamma_1,\ldots,\gamma_s} \subseteq \mathbb{R}[x_1,\ldots,x_n]$, the associated ring-theoretic \textit{cone} can be expressed as: 
  \begin{equation} \nonumber
      \Gamma = \{ p_0 + p_1\gamma_1 + \ldots + p_s\gamma_s + p_{12}\gamma_1\gamma_2 +\ldots + p_{12\ldots s}\gamma_1\ldots\gamma_s\}~,
  \end{equation}
  where $p_0, \ldots,p_{12\ldots s}$
are the polynomials that are SOS.
  \end{definition}
  }
  
  Based on the ring-theoretic \textit{cone},  \textit{Positivstellensatz} condition is specified in the following theorem.

  \begin{theorem}[\textit{Positivstellensatz}] \label{theo:posi}
  Let $(\gamma_j)_{j=1,\ldots,s}$, $(\psi_k)_{k=1,\ldots,t}$, $(\zeta_l)_{l=1,\ldots,r}$ be finite families of polynomials in $\mathbb{R}[x_1,\ldots,x_n]$. Let $\Gamma$ be the ring-theoretic cone generated by $(\gamma_j)_{j=1,\ldots,s}$, $\Xi$ the multiplicative monoid~\cite{parrilo2003semidefinite} generated by $(\psi_k)_{k=1,\ldots.,t}$, and $\Phi$ the ideal~\cite{parrilo2003semidefinite} generated by $(\zeta_l)_{l=1,\ldots,r}$. Then, the following properties are equivalent:
  \begin{enumerate}
      \item The following set is empty
          \begin{align}\label{eq: them1 empty set}
          \begin{cases}
            x \in \mathbb{R}^n \Biggr|\begin{array}{ll}
                                    \gamma_j(x)\geq 0,j = 1,..,s\\
                                    \psi_k(x)\neq 0, k = 1,\ldots,t\\
                                    \zeta_l(x)=0, l = 1,\ldots,r
                                    \end{array}
          \end{cases}.
          \end{align}
      \item There exist  $\gamma \in \Gamma, \psi\in \Xi, \zeta \in \Phi$ such that 
      \begin{align}
          \gamma+\psi^2+\zeta = 0,
          \label{eq:pos}
      \end{align}
  \end{enumerate}
  where $\psi = 1$ if 
  $t = 0$.
  \end{theorem}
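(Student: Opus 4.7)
The plan is to prove the two directions separately, with only one of them being substantial. For $(2)\Rightarrow(1)$, I would argue by contradiction: if a point $x^*$ lay in the set \eqref{eq: them1 empty set}, then evaluating the identity \eqref{eq:pos} at $x^*$ gives $\gamma(x^*)\ge 0$ (since $\gamma$ is a sum of SOS multiples of products of the $\gamma_j$'s, all nonnegative at $x^*$), $\psi(x^*)^2>0$ (since each $\psi_k(x^*)\ne 0$, hence so is their product), and $\zeta(x^*)=0$ (since each $\zeta_l(x^*)=0$). Their sum would be strictly positive, contradicting \eqref{eq:pos}. This direction is essentially a bookkeeping check that the closure properties of cones, monoids, and ideals respect pointwise sign.

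The nontrivial direction $(1)\Rightarrow(2)$ I would attack by contrapositive: assume no identity \eqref{eq:pos} exists, and construct a witnessing point $x^*\in\mathbb{R}^n$. The strategy has six steps: (i) pass to the quotient ring $A:=\mathbb{R}[x_1,\ldots,x_n]/\Phi$, which trivializes the equality constraints; (ii) let $\overline\Gamma$ be the image of $\Gamma$ in $A$, so the hypothesis becomes $-\overline\psi^2\notin\overline\Gamma$ for every $\overline\psi$ in the image of $\Xi$; (iii) apply Zorn's lemma to enlarge $\overline\Gamma$ to a cone $P\subseteq A$ that is maximal with respect to still avoiding every such $-\overline\psi^2$; (iv) show that maximality forces $P$ to be an \emph{ordering}, i.e., $P\cup(-P)=A$ and $I:=P\cap(-P)$ is a prime ideal; (v) pass to the ordered residue field $K:=\mathrm{Frac}(A/I)$, which inherits a total order from $P$; (vi) invoke the Artin--Lang homomorphism theorem to specialize this abstract ordering to an $\mathbb{R}$-valued evaluation at a genuine point $x^*\in\mathbb{R}^n$, whose sign conditions are exactly those required by \eqref{eq: them1 empty set}.

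The hardest ingredients are steps (iv) and (vi). Step (iv) is a careful combinatorial argument: if neither $a$ nor $-a$ lay in $P$, maximality would produce two forbidden identities in $P+a\cdot\Gamma^*$ and $P-a\cdot\Gamma^*$ whose combination exhibits some $-\overline\psi^2$ already inside $P$, a contradiction; primeness of $I$ is handled by an analogous case analysis on products $ab\in I$ with $a,b\notin I$. Step (vi), the descent to $\mathbb{R}$, is the deepest: it requires embedding $K$ into its real closure $R$ and then applying the Tarski--Seidenberg transfer principle (equivalently, model-completeness of the theory of real closed fields) to conclude that a first-order system of polynomial sign conditions solvable over \emph{some} real closed extension of $\mathbb{R}$ is already solvable in $\mathbb{R}$. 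I expect step (vi) to be the main obstacle, since it is where one must invoke genuinely nontrivial real-algebraic machinery rather than elementary manipulations of cones and ideals; the remaining steps reduce, by comparison, to careful accounting.
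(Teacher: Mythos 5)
The paper does not prove this theorem at all: it is quoted as a classical result and the text immediately after the statement defers entirely to the cited reference (Bochnak--Coste--Roy) for the proof. So there is no in-paper argument to compare against; what you have written is a reproof of a cited black-box theorem. That said, your outline is a faithful sketch of the standard argument from that very reference. The easy direction $(2)\Rightarrow(1)$ is exactly the pointwise sign bookkeeping you describe. For $(1)\Rightarrow(2)$, your six steps track the textbook route: quotient by the ideal, Zorn's lemma to obtain a cone maximal among those avoiding $-\overline\psi^2$ for $\overline\psi$ in the image of the monoid, the lemma that such a maximal cone is a prime cone (ordering with prime support), passage to the ordered residue field, and descent to a real point via Artin--Lang / Tarski--Seidenberg. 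Two small points deserve care if you were to write this out in full. First, in step (iv) the ``combination'' identity must produce $-(\psi_1\psi_2)^2\in P$ from $-\psi_1^2\in P+a\Gamma^*$ and $-\psi_2^2\in P-a\Gamma^*$; this works because multiplying $\psi_1^2+p_1=-as_1$ by $\psi_2^2+p_2=as_2$ yields $(\psi_1^2+p_1)(\psi_2^2+p_2)=-a^2s_1s_2\in -P$, and $\psi_1\psi_2$ is again in the monoid --- the closure of $\Xi$ under products is what makes the maximality argument close up, and many textbook treatments avoid this bookkeeping by first localizing at $\Xi$ so that only $-1$ need be excluded. Second, step (vi) uses that the base field $\mathbb{R}$ is real closed; the theorem as stated (over $\mathbb{R}^n$ with real coefficients) satisfies this, but the specialization step would fail over, say, $\mathbb{Q}$. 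Neither point is a gap in your plan, only places where the details are genuinely nontrivial, consistent with your own assessment that step (vi) carries the real-algebraic weight.
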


  \revision{We refer the reader to \cite{parrilo2003semidefinite} for the proofs of \Cref{theo:posi}.} In summary, \textit{Positivstellensatz} shows that the refute set being empty is equivalent to a feasibility problem in \eqref{eq:pos}. 
  Therefore, we can follow the same procedure
  to construct a refute set for
Problem \ref{def:problem 1}
  and then use \textit{Positivstellensatz} to turn the problem into a feasibility problem similar to \eqref{eq:pos}, which can then be formed into an ordinary nonlinear program.

\section{Method} \label{sec: method}
In this section, we will introduce four steps to solve Problem \ref{def:problem 1} leveraging \Cref{theo:posi}. 

\subsection{The Local Manifold Positiveness Problem}
\label{sec:lmp}
Since \Cref{theo:posi} applies to a set of equalities, inequations, and inequalities, we need to unroll $\min_{u \in \mathcal{U}} \dot{\phi}(x, u)$ and get rid of 
$\min$
operator. Firstly, the equivalent form of $\min_{u \in \mathcal{U}} \dot{\phi}(x, u)$ is summarized as following:
\begin{align}
\label{eq:infimum dot phi ori}
    \min_{u \in \mathcal{U}} \dot{\phi}(x, u) &= \min_{u \in \mathcal{U}} \underbrace{\frac{\partial \phi}{\partial x}f(x)}_{{L_f\phi}} + \underbrace{\frac{\partial \phi}{\partial x}g(x)}_{{L_g\phi}}u \\ \nonumber 
    &= \min_{u \in \mathcal{U}} {L_f\phi} + \sum_{i = 1}^{n_u} {L_g\phi}^{[i]}u^{[i]},
\end{align}
where ${L_g\phi}^{[i]}$ and $u^{[i]}$ denote the $i$-th dimension of ${L_g\phi}$ and $u$, respectively. 
\eqref{eq:infimum dot phi ori} can be rewritten as 
\begin{align}
\label{eq:infimum dot phi analytical}
    \min_{u \in \mathcal{U}} \dot{\phi}(x, u) &=  {L_f\phi} + \sum_{i = 1}^{n_u} {L_g\phi}^{[i]} \mathbb{I}_{{L_g\phi}^{[i]}}\{u^{[i]}_{min}, u^{[i]}_{max}\}, \\ \nonumber 
\end{align}
where we define operator $\mathbb{I}$ as $\mathbb{I}_{A}\{B,C\} = B$ if $A \geq 0$, and $\mathbb{I}_{A}\{B,C\} = C$ if $A < 0$. Therefore, Problem \ref{def:problem 1} becomes 
\begin{problem}[\textit{Local Manifold Positiveness}]
\label{eq: local manifold problem}
Find $\Theta = \{k_1, k_2, \cdots, k_n\}$, such that
\begin{align}
    & \forall_{x \in \mathcal{X}, \phi_\Theta = 0}, {L_f\phi} + \sum_{i = 1}^{n_u} {L_g\phi}^{[i]} \mathbb{I}_{{L_g\phi}^{[i]}}\{u^{[i]}_{min}, u^{[i]}_{max}\} < 0,
\end{align}
where \textbf{local} denotes the state space $\mathcal{X}$ which is a subset of $\mathbb{R}^{n_x}$, and \textbf{manifold} denotes we are considering the states on the manifold defined by $\phi_\Theta = 0$.
\end{problem}

\begin{remark}
The state-of-the-art SOSP-based safety index synthesis methods try to synthesis $\phi_\Theta$, such that $\forall_{x\in\mathbb{R}^{n_x}},\exists_{u\in\mathcal{U}}\dot\phi_{\Theta}(x,u) < 0$, which is a \textit{global positiveness} constraint. Our formulation only requires the existence of safe control on a local set of critical states, i.e. \textit{local manifold positiveness} constraint. Therefore, our method considers a larger solution space; hence it is easier to find a solution, and the synthesized safety index is less conservative.  
\end{remark}


\subsection{The Refute Problem}\label{subsec:refute_prob}
We can show that the \textit{local manifold positiveness constraint} in Problem \ref{eq: local manifold problem} is satisfied by showing its refute set is empty. The refute set is constructed as:
\begin{equation}
    \begin{split}
    \begin{cases}
      {L_f\phi} + \sum_{i = 1}^{n_u} {L_g\phi}^{[i]} \mathbb{I}_{{L_g\phi}^{[i]}}\{u^{[i]}_{min}, u^{[i]}_{max}\} \geq 0\\
      x \in \mathcal{X} \\ 
      \phi_\Theta = 0
    \end{cases}
        \label{eq:refute first}
    \end{split}
\end{equation}

\rebuttal{Denote} $\mathcal{N},\mathcal{M}\subseteq \{1,2,\ldots,n_u\}$, $\mathcal{N} + \mathcal{M} = \{1,2,\ldots,n_u\}$ and $\mathcal{N}\cap\mathcal{M} = \emptyset$. We also denote $\forall i \in \mathcal{N}, {L_g\phi}^{[i]} \geq 0$, and $\forall i \in \mathcal{M}, {L_g\phi}^{[i]} \leq 0$. Then, \eqref{eq:refute first} corresponds to $2^{n_u}$ instances of refute set,
since each ${L_g\phi}^{[i]}$ can be either negative or nonnegative, yielding $2^{n_u}$ combinations of $\{\mathcal{N},\mathcal{M}\}$. Then each instance of \eqref{eq:refute first} can be rewritten as:
 \begin{equation}
    \begin{split}
    \begin{cases}
      \gamma^*_0 := {L_f\phi} + \sum_{i \in \mathcal{N}} {L_g\phi}^{[i]}u^{[i]}_{min} + \sum_{i \in \mathcal{M}} {L_g\phi}^{[i]}u^{[i]}_{max} \geq 0\\
      \gamma^*_i := S(x)_i \geq 0, i = 1,2,\cdots,n_S \\
      \gamma^*_{n_S+i} := {L_g\phi}^{[i]} \geq 0, i \in \mathcal{N} \\ 
      \gamma^*_{n_S+|\mathcal{N}|+i} := -{L_g\phi}^{[i]} \geq 0, i \in \mathcal{M} \\
      \zeta^* := \phi_\Theta = 0
    \end{cases}
        \label{eq:refute second}
    \end{split}
\end{equation}
\Cref{theo:posi} enables us to turn the emptiness problem for \eqref{eq:refute second} to a feasibility problem similar to \eqref{eq:pos}. As a result, we can show \eqref{eq:refute second} is empty using the following condition:
\begin{align}
\label{eq: refute 3rd}
    &\exists q_1 \in \mathbb{R}[x], \exists p_i\in\text{SOS},\forall i, \\ \nonumber
\text{ s.t. } & \gamma = p_0 + p_1\gamma_0^* + \ldots +  p_s\gamma_N^* \\ \nonumber
    & + p_{01}\gamma_0^*\gamma_1^* + \ldots + p_{012\ldots N}\gamma_0^*\ldots\gamma_N^*, \\ \nonumber 
     & \zeta = q_1 \zeta^*, \\ \nonumber
     & \gamma+\zeta+1 = 0 ,
\end{align}
where $N = n_S+|\mathcal{N}|+|\mathcal{M}|$.
Therefore, the Problem \ref{eq: local manifold problem} can be turned into the following equivalent problem:

\begin{problem}[\textit{Refute}]
\label{eq: refute problem}
find $\Theta = \{k_1, k_2, \cdots, k_n\}$, such that
\begin{align}
    \forall \mathcal{N,M,} \text{ \eqref{eq: refute 3rd} holds}
\end{align}
\end{problem}



\begin{remark}
It is noteworthy that \Cref{theo:posi} requires $\gamma^*$ and $\eta^*$ to be polynomials, which limits the applicability of current SOSP-based safety index synthesis methods.
On the other hand, as long as the system dynamics satisfy \Cref{def: subs}, our proposed method can be applied.
\end{remark}

\rebuttal{
\begin{remark}
Note that since ${L_g\phi}$ depends on $x$ and state space $\mathcal{X}$ is bounded, not all combinations of $\{\mathcal{N}, \mathcal{M}\}$ are possible. For those impossible combinations of $\{\mathcal{N}, \mathcal{M}\}$, \eqref{eq:refute second} is directly empty, which further eliminates the number of \eqref{eq:refute second} to be considered. 
\end{remark}
}

\subsection{The Nonlinear Programming Problem}\label{SDP}
To efficiently search for the existence of SOS polynomials $\{p_i\}$ and polynomials \rebuttal{$q_1$}, \rebuttal{we set $p_i$ to be a positive scalar $\alpha_i$ for all $i\geq1$ and $q_1$ to be a scalar $\beta_1$}.
Hence, a simplified condition of \eqref{eq: refute 3rd} can be defined:
\begin{align}
\label{eq: probsufficient}
    &\exists \rebuttal{\beta_1} \in \mathbb{R}, \exists \alpha_i \geq 0, \forall i\geq 1, \\ \nonumber
\text{ s.t. } & p_0 = -\alpha_1\gamma_0^* \ldots - \alpha_s\gamma_N^* - \alpha_{01}\gamma_0^*\gamma_1^* -\ldots\\ \nonumber
        &\quad \quad - \alpha_{012\ldots N}\gamma_0^*\ldots\gamma_N^* - \beta_1\zeta^* - 1  \in\text{SOS},
\end{align}



By searching for limited types of SOS polynomials, the existence of $\rebuttal{\beta_1}, \alpha_i$ satisfying \eqref{eq: probsufficient} is sufficient to satisfy the constraints in \eqref{eq: refute 3rd}.

Denote $\Omega = [\beta_1, \alpha_1, \alpha_2, \ldots , \alpha_{012\ldots N}]$ as the decision vector for \eqref{eq: probsufficient}. To solve \eqref{eq: probsufficient}, suppose the degree of $p_0$ is $2d$, and we first do a sum-of-squares decomposition of $p_0$ such that $p_0 = Y^\top Q^*(\Theta, \Omega) Y$, where $Q^*$ is symmetric and $Y = \begin{bmatrix}1 & x^{[1]} & x^{[2]} & \ldots & x^{[n]} & x^{[1]}x^{[2]} & \ldots & \big(x^{[n]}\big)^d \end{bmatrix}$ and $x^{[i]}$ is the entry in $x$. Specifically, for off-diagonal terms $Q^*_{ij}$ as the element of $Q^*$ at $i$-th row and $j$-th column ($i \neq j$), \revision{assuming that the} coefficient of the term $Y_iY_j$ in $p_0$ is $w_{ij}$, we set $Q^*_{ij} = \frac{w_{ij}}{2}$. 
With decomposed $Q^*$, the condition of \eqref{eq: probsufficient} can be rewritten as:
\begin{equation}
    \begin{split}
        & \det(Q^*(\Theta, \Omega)_k) > 0, \forall k,\\
    \end{split}
    \label{eq: sdp}
\end{equation}
where $Q^*(\Theta, \Omega)_k$ denotes the $k \times k$ submatrix consisting of the first $k$ rows and columns of $Q^*(\Theta, \Omega)$. 

According to Problem \ref{eq: refute problem}, there are $2^{n_u}$ sets of condition \eqref{eq: refute 3rd} and hence $2^{n_u}$ sets of condition \eqref{eq: probsufficient}. Denote the $\Omega_j = [\beta_1^j, \alpha_1^j, \alpha_2^j, \ldots , \alpha_{012\ldots N}^j]$ as the decision vector for $j$-th set of condition \eqref{eq: probsufficient}. With \eqref{eq: sdp}, Problem \ref{eq: refute problem} can be rewritten as:

\begin{problem}[\textit{Nonlinear Programming}]
\label{probfinal}
Find $[\Theta, \Omega_1, \Omega_2, \cdots, \Omega_{2^{n_u}}]$, such that
\begin{align}
    & \det(Q^*(\Theta, \Omega_j)_k) > 0, \forall k, \forall j, \\ \nonumber
    & \alpha_i^j > 0, \rebuttal{\beta_1^j}\in\mathbb{R}, \forall i, \forall j.
\end{align}
\end{problem}

Note that Problem \ref{probfinal} is a nonlinear programming problem without any objective. Hence, any arbitrary objective can be added to Problem \ref{probfinal}, and \rebuttal{it can be solved 
by off-the-shelf nonlinear programming solvers}.

\begin{remark}
Note that the state-of-the-art SOSP-based safety index synthesis methods 
need to solve multiple sets of SOSP iteratively.
\rebuttal{
On the other hand, Problem \ref{probfinal} demonstrates that our method just needs to solve one nonlinear program problem.
However, since Problem \ref{probfinal} involves enumerating at most $2^{n_u}$ combinations of constraint sets, which might not scale to high-dimensional systems (the scalability depends on the bound of $\mathcal{X}$).}
\end{remark}

\section{Properties of General Safety Index Design}\label{sec:proof}
This section proves that we can obtain a feasible safety index design by solving Problem \ref{probfinal}. The main result is summarized in the following theorem.

\begin{theorem}[Feasibility of General Safety Index Design]
The solution of Problem \ref{probfinal}   is also a solution of Problem \ref{def:problem 1}.
\label{theo1}
\end{theorem}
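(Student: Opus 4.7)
The plan is to establish the chain of implications
\[
\text{Problem~\ref{probfinal}} \;\Longrightarrow\; \text{Problem~\ref{eq: refute problem}} \;\Longrightarrow\; \text{Problem~\ref{eq: local manifold problem}} \;\Longrightarrow\; \text{Problem~\ref{def:problem 1}},
\]
following the constructions already laid out in Section~\ref{sec: method}. Most of the algebraic work has been done in the text; the theorem is essentially the assertion that each transformation step preserves feasibility in the forward direction.

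For the first implication, suppose $(\Theta,\Omega_1,\dots,\Omega_{2^{n_u}})$ satisfies Problem~\ref{probfinal}. For each $j$, Sylvester's criterion, applied to the positivity of every leading principal minor $\det(Q^*(\Theta,\Omega_j)_k)>0$, yields that $Q^*(\Theta,\Omega_j)$ is positive definite. Hence $p_0 = Y^\top Q^*(\Theta,\Omega_j)Y$ admits a sum-of-squares decomposition and belongs to $\mathrm{SOS}$. Reading \eqref{eq: probsufficient} as a definition of $p_0$ and instantiating the higher-order SOS multipliers $p_{i_1\cdots i_k}$ in \eqref{eq: refute 3rd} with the scalars $\alpha_{i_1\cdots i_k}^j \ge 0$ (and $q_1$ with $\beta_1^j$) produces $\gamma\in\Gamma$ and $\zeta\in\Phi$ such that $\gamma+\zeta+1=0$, which is exactly the Positivstellensatz certificate of \eqref{eq: refute 3rd} with $\psi=1$. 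Thus Problem~\ref{eq: refute problem} is satisfied.

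For the second implication, apply \Cref{theo:posi} to each sign pattern $\{\mathcal{N},\mathcal{M}\}$: the existence of a certificate of the form \eqref{eq: refute 3rd} implies emptiness of the corresponding refute set \eqref{eq:refute second}. Since $\{\mathcal{N},\mathcal{M}\}$ enumerates all $2^{n_u}$ possible sign combinations of the entries of $L_g$, every state $x\in\mathcal{X}$ with $\phi_\Theta(x)=0$ falls into exactly one of these cases, and the emptiness across all cases rules out $L_f + \sum_i L_g^{[i]}\,\mathbb{I}_{L_g^{[i]}}\{u_{min}^{[i]},u_{max}^{[i]}\}\ge 0$ anywhere on the manifold. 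Thus the local manifold positiveness condition of Problem~\ref{eq: local manifold problem} holds. The third implication then follows immediately from the analytical identity \eqref{eq:infimum dot phi analytical}, which is valid because $\mathcal{U}$ is an orthotope and each coordinate of $u$ can be minimized independently.

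The main obstacle I expect is the bookkeeping around the case split into $2^{n_u}$ refute instances. The indicator $\mathbb{I}$ is non-polynomial, so one cannot obtain a single Positivstellensatz certificate; one must argue that the union of all $2^{n_u}$ emptiness conditions is equivalent to the single manifold condition in Problem~\ref{eq: local manifold problem}, and that every certificate in Problem~\ref{probfinal} is indexed consistently across the cases. A secondary subtlety is that restricting $p_i$ to nonnegative scalars $\alpha_i$ and $q_1$ to a scalar $\beta_1$ is only sufficient (not necessary) for the Positivstellensatz feasibility in \eqref{eq: refute 3rd}; this is precisely why the theorem is one-directional, stating that solutions of Problem~\ref{probfinal} are solutions of Problem~\ref{def:problem 1} and not the converse. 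With these two points handled, the chain of implications closes and the theorem follows.
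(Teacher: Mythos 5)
Your proposal is correct and follows essentially the same route as the paper's proof: the same chain Problem~\ref{probfinal} $\Rightarrow$ Problem~\ref{eq: refute problem} $\Rightarrow$ Problem~\ref{eq: local manifold problem} $\Rightarrow$ Problem~\ref{def:problem 1}, using Sylvester's criterion for the first step, the Positivstellensatz for the second, and the coordinatewise minimization over the orthotope for the third. Your explicit handling of the $2^{n_u}$ sign-pattern case split and your remark on why scalar multipliers make the argument one-directional are slightly more careful than the paper's treatment, but the underlying argument is the same.
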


\begin{proof}
Denote the solution of Problem \ref{probfinal} as  $\Theta_3$, the solution of Problem \ref{eq: refute problem} as $\Theta_2$, the solution of Problem \ref{eq: local manifold problem} as $\Theta_1$,
and the solution of Problem \ref{def:problem 1} as $\Theta^o$.


\textbf{Relationship between $\Theta_1$ and $\Theta^o$}: According to \eqref{eq:infimum dot phi ori}, the following condition holds
\vspace{-5pt}
\begin{align}
\label{eq:inf dot phi is what}
    \min_{u \in \mathcal{U}} \dot{\phi}(x, u) &= {L_f\phi} +  \sum_{i = 1}^{n_u} \min_{u \in \mathcal{U}} {L_g\phi}^{[i]}u^{[i]},
    \vspace{-5pt}
\end{align}
where $\min_{u \in \mathcal{U}} {L_g\phi}^{[i]}u^{[i]} = {L_g\phi}^{[i]}\sup_{u\in\mathcal{U}}u^{[i]}$ if ${L_g\phi}^{[i]} < 0$, and $\min_{u \in \mathcal{U}} {L_g\phi}^{[i]}u^{[i]} = {L_g\phi}^{[i]}\inf_{u\in\mathcal{U}}u^{[i]}$ if ${L_g\phi}^{[i]} \geq 0$. Hence, the condition $\min_{u \in \mathcal{U}} \dot{\phi}(x, u)$ is equivalent to the condition ${L_f\phi} + \sum_{i = 1}^{n_u} {L_g\phi}^{[i]} \mathbb{I}_{{L_g\phi}^{[i]}}\{u^{[i]}_{min}, u^{[i]}_{max}\}$. So $\Theta_1$ is an instance of $\Theta_o$.

\textbf{Relationship between $\Theta_1$ and $\Theta_2$}:
  For the constraint $\forall_{x \in \mathcal{X}, \phi_\Theta = 0}, {L_f\phi} + \sum_{i = 1}^{n_u} {L_g\phi}^{[i]} \mathbb{I}_{{L_g\phi}^{[i]}}\{u^{[i]}_{min}, u^{[i]}_{max}\} < 0$, its refute certification is $\forall_{x \in \mathcal{X}, \phi_\Theta = 0}, {L_f\phi} + \sum_{i = 1}^{n_u} {L_g\phi}^{[i]} \mathbb{I}_{{L_g\phi}^{[i]}}\{u^{[i]}_{min}, u^{[i]}_{max}\} \geq 0$.
  By introducing auxiliary variables, the refute certifications of the constraints in Problem \ref{eq: local manifold problem} can be written as  \eqref{eq:refute second}. By \revision{\Cref{theo:posi}}, \cref{prop} for $\Gamma$ and definition for $\Phi$~\cite{parrilo2003semidefinite}, we know  \eqref{eq:refute second} is equivalent to \eqref{eq: refute 3rd}. Hence, Problem \ref{eq: local manifold problem} is equivalent to Problem \ref{eq: refute problem}, which indicates $\Theta_2$ is an instance of $\Theta_1$.
  

\textbf{Relationship between $\Theta_2$ and $\Theta_3$}:
The equivalent condition to \eqref{eq: probsufficient} is that $Q^*$ is positive semidefinite~\cite{parrilo2003semidefinite}. The condition \eqref{eq: sdp} proves that $Q^*$ is positive definite according to Sylvester's criterion.
Hence, the decision vectors $\{\Omega_1, \Omega_2, \cdots, \Omega_{2^{n_u}}\}$ satisfying condition \eqref{eq: sdp} also satisfy condition \eqref{eq: probsufficient}.


The simplest SOS is a positive constant scalar, \ie $\alpha \geq 0 \in \text{SOS}$. Similarly, the simplest polynomial is a constant scalar, \ie $\beta \in \mathbb{R}$. Hence, by substituting $p_i$ with $\alpha_i\geq0$ for $i = 1,2,\ldots$, and substituting \rebuttal{$q_1$} with $\rebuttal{\beta_1} \in \mathbb{R}$, condition \eqref{eq: probsufficient} can be rewritten as \eqref{eq: refute 3rd}. Therefore, the existence of $\{\Omega_1, \Omega_2, \cdots, \Omega_{2^{n_u}}\}$ satisfying the conditions in Problem \ref{probfinal} implies the satisfaction of conditions in Problem \ref{eq: refute problem}. Therefore, $\Theta_3$ is an instance of $\Theta_2$.

In summary, $\Theta_3$ is an instance of $\Theta_2$, hence an instance of $\Theta_1$, and hence an instance of $\Theta^o$, which verifies the claim.
\end{proof}

\section{Numerical Study}\label{sec: exp}

\subsection{Robot Arm Numerical Study Setup}
We evaluate our method in 2D robot arm systems with different degrees of freedom (DOF). The experimental platform is illustrated in \Cref{fig:nDOFS robot arm}. The link length of the robot is 1 meter. The obstacle is set as a half plane 
$0.5 \times \mathfrak{n}$
meter away from the robot base with $\mathfrak{n}$DOF.

\begin{wrapfigure}{r}{0.15\textwidth}
    \centering
    \includegraphics[width=0.15\textwidth]{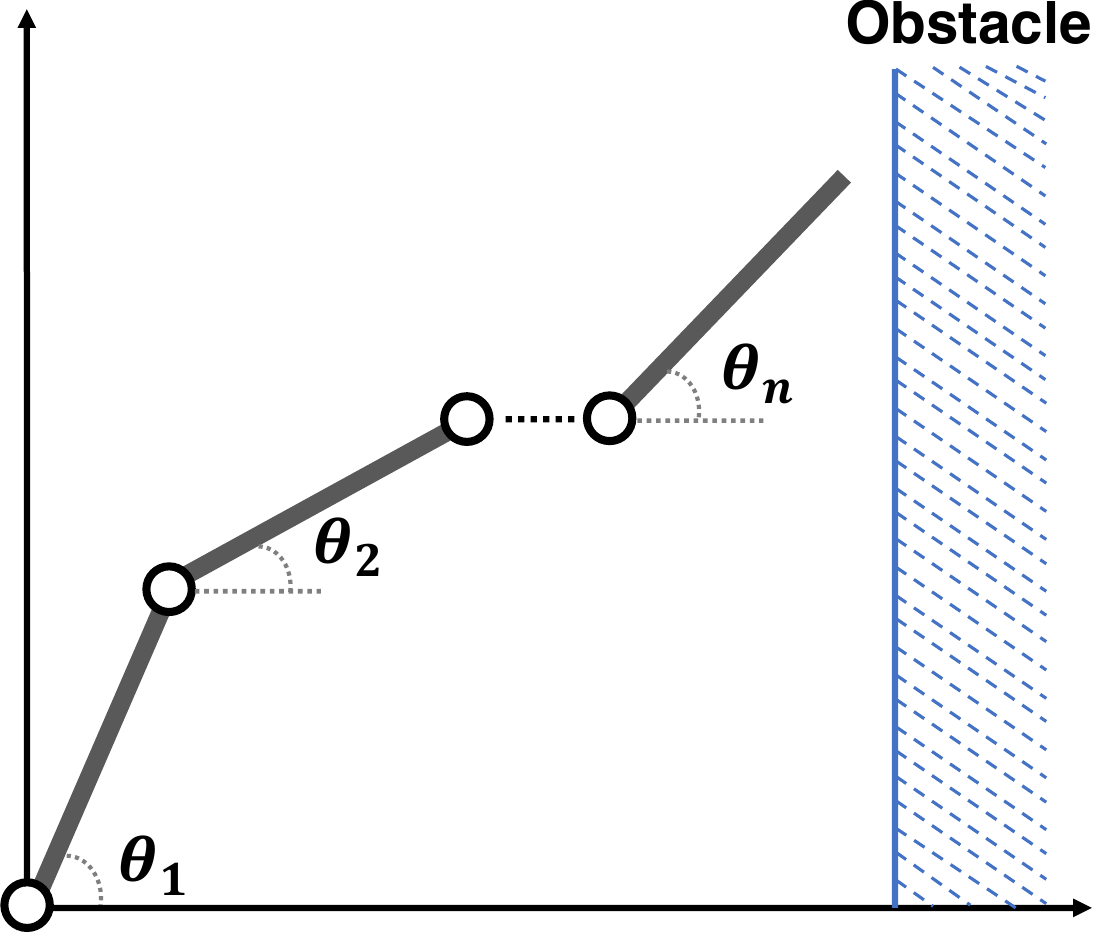}
    \caption{$\mathfrak{n}$DOFs robot manipulator.}
    \label{fig:nDOFS robot arm}
\vspace{-10pt}
\end{wrapfigure}

The state space includes 1) each joint angle and 2) each joint velocity. \rebuttal{Each joint angle is bounded by a sub-interval within $[\frac{\pi}{3}, \frac{2\pi}{3}]$ and each joint velocity is bounded within $[-1,1]$}. The control inputs are accelerations of each joint. 
The acceleration is limited within $[-1,1]$. 
The user-defined safety specification ($\phi_0$) requires the end effector to avoid collision with the wall. 



\subsection{Robot Arm Running Example}
In this subsection, we introduce a running example showing 1DOF robot arm safety index design via SOSP.
Consider 1) robot state $x = [\theta, \dot{\theta}]$, where $\theta \in [\pi/3,2\pi/3]$, $\dot\theta \in [-1,1]$;  2) robot control $u = [\ddot{\theta}]$, where $\ddot \theta \in [-1, 1]$.
\rebuttal{Consider the user-defined safety index as $\phi_0 =  \cos\theta - 0.5$}. 
Then, safety index becomes $\phi = \cos\theta -0.5 - k\sin\theta\dot \theta$,
with $\dot\phi = -\sin\theta \dot \theta - k \cos\theta \dot\theta^2 - k\sin\theta\ddot\theta$.
Since $\sin\theta > \frac{\sqrt{3}}{2}$, the minimum $\dot \phi$ is achieved when $\ddot \theta = 1$, i.e. $\min_{\ddot\theta} \dot \phi = -\sin\theta \dot \theta - k \cos\theta \dot\theta^2 - k\sin\theta$.
So the fundamental condition for nonempty set of safe control is: 
\vspace{-3pt}
\begin{equation}
\label{fundamental}
    \left\{
        \begin{array}{cl}
         & f_1  = -\sin\theta\dot\theta - k\cos\theta\dot\theta^2 - k\sin\theta < 0 \\ 
         & f_2 = \theta - \frac{\pi}{3} \geq 0\\
         & f_3 = \frac{2\pi}{3} - \theta \geq 0 \quad\quad\quad \\
         & f_4 = 1 - \dot\theta^2 \geq 0 \quad\quad\quad \\
         & h_1 = \cos\theta - 0.5 - k \sin\theta \dot \theta  = 0 \quad\quad\quad 
        \end{array}
    \right.
    \vspace{-3pt}
\end{equation}
Next, the equivalent refute set for \eqref{fundamental}  can be constructed by 
replacing $f_1 < 0$ in \eqref{fundamental} with $f_1 \geq 0$, which indicates that when $\phi$ hits zero, no $x \in \mathcal{X}$ would satisfy $\min_{\ddot \theta}\dot \phi > 0$. By projecting the non-polynomial terms of \eqref{fundamental} into higher dimensional polynomial manifold, i.e. substituting $\sin\theta = x_1, \cos\theta = x_2, \dot\theta = x_3$, 
 the refute set becomes:
\begin{equation}
\label{hiya}
    \left\{
        \begin{array}{cl}
         & f_1 = -x_1x_3 - kx_2x_3^2 - kx_1 \geq 0\\
         & f_2 = x_1 \geq \frac{\sqrt{3}}{2} \\
         & f_3 = 1 - x_1 \geq 0 \\
         & f_4 = 1 - x_3^2 \geq 0 \\
         & h_1 = x_2 - 0.5 - k x_1 x_3  = 0\\
         & h_2 = x_1^2 + x_2^2 - 1 = 0
        \end{array}
    \right.
    \vspace{-3pt}
\end{equation}
According to \Cref{theo:posi}, the emptiness of \eqref{hiya} is equivalent to the existence of $p_0, p_1, p_2, p_3, p_4 \in \text{SOS polynomials}$, $q_1, q_2 \in 
\text{ polynomials}$, such that: 
\vspace{-2pt}
\begin{align}\nonumber
    p_0 + p_1f_1 + p_2f_2 + p_3f_3 + p_4f_4 + q_1h_1 + q_2h_2 + 1 = 0
    \vspace{-2pt}
\end{align}

By considering a reduced parameter space, the emptiness of \eqref{hiya} in equivalent to the existence of $p_1, p_2, p_3, p_4 \in \mathbb{R}^+$, and $q_1, q_2 \in \mathbb{R}$, such that
\vspace{-2pt}
\begin{align}\nonumber
    p_0 = -(p_1f_1 + p_2f_2 + p_3f_3 + p_4f_4 + q_1h_1 + q_2h_2 + 1) 
    \vspace{-2pt}
\end{align}
is a sum of squares polynomial, where $[p_1, p_2, p_3, p_4,q_1, q_2]$ can be searched via nonlinear programming.

\rebuttal{To design safety index via SOSP for higher DOF robot arm, we need to construct $n$ sets of inequalities and equalities similar to \eqref{hiya} for the refute set of nDOF case, due to the fact that all joints are independent with each other.
}





\subsection{Robot Arm Results}
To solve for safety index design, we first provide reference $[\Theta, \Omega_1, \Omega_2, \cdots, \Omega_{2^{n_u}}]$ via random sampling, and then solve Problem \ref{probfinal} using MATLAB \verb+fmincon+ function \rebuttal{via interior point solver}. To evaluate safety index design, we randomly initialize the robot, and use safe set algorithm~\cite{liu2014control} to safeguard the robot for $2000$ time steps, where the robot takes the most dangerous reference control (i.e., increasing $\phi$).
We calculate the following statistics across the trials: (i) \textbf{Computing Time}: The average time to compute a safety index parameterization where all experiments are performed with a 2.3GHz Intel i5 Processor;  (ii) \textbf{Validness}: If safe control can be found over 1000 evaluation runs, we regard such parameterization of the safety index as valid; (iii) \textbf{Variance}: The variance of computed safety index parameterization across trials; (iv)  \rebuttal{\textbf{Feasibility}: The chances of solving a feasible solution for nonlinear programming across trials.}

\begin{table}[t]
\vspace{5pt}
\caption{\rebuttal{The performance of safety index design via SOS programming with 1000 random seeds (Robot Arms). Average computing time, validness, variance and the changes of solving a feasible solution are reported.}}
\vspace{-10pt}
\begin{center}
\begin{tabular}{c|c|c|c|c}
\toprule
 &  Time (s) & Validness (\%) & Variance & \rebuttal{Feasibility (\%)}\\
\hline
2DOF & 0.160 & 100 & 0.118 & \rebuttal{99.5}\\
4DOF & 0.180 & 100 & 0.042 & \rebuttal{95.6}\\
6DOF & 0.196 & 100 & 0.147 & \rebuttal{92.2}\\
10DOF & 0.303 & 100 & 0.247 & \rebuttal{91.6}\\
14DOF & 0.422 & 100 & 0.342 & \rebuttal{34.7}\\
\bottomrule
\end{tabular}
\end{center}
\vspace{-25pt}
\label{tab:sos_table}
\end{table}

\begin{figure}[htbp]
    \centering
    \vspace{-5pt}
    \includegraphics[width=0.55\columnwidth]{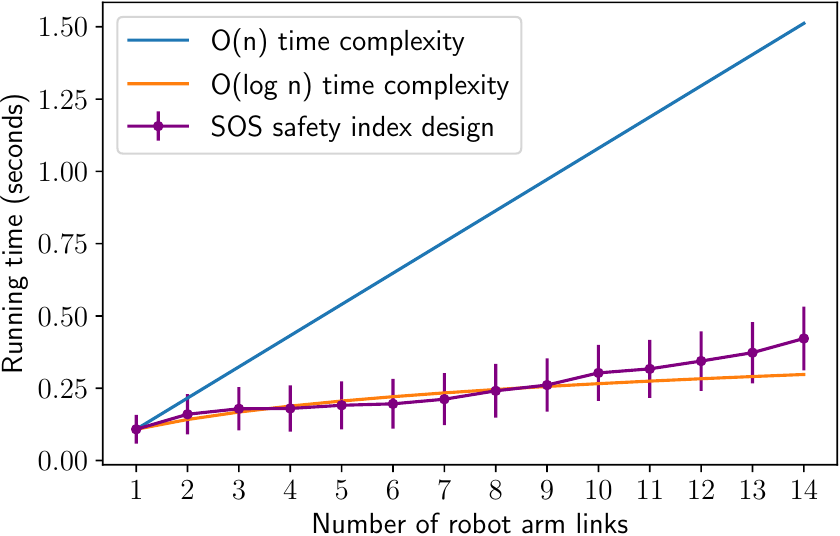}
    \caption{SOS running time in comparison with time complexities of $\mathcal{O}(n)$ and $\mathcal{O}(\log n)$. The purple vertical lines represent standard deviation of running time over 5000 random seeds.}
    \label{fig:SOS time}
    \vspace{-10pt}
\end{figure}
The results are summarized in \Cref{tab:sos_table}. 
SOS programming achieves 100\% validness (i.e., nonempty set of safe control for all states) across all degrees of freedom. \rebuttal{The changes of solving a feasible solution for nonlinear programming decreases as the degree of freedom increases.}
Moreover,
 the computing time scales logarithmically as the degree of freedom increases as shown in \Cref{fig:SOS time}, indicating the efficiency of 
our proposed method
for safety index design in complex systems.

\subsection{Autonomous Driving Results}
\rebuttal{
Additionally, we apply our method to design safety index for autonomous vehicles. The vehicle is a 4-state unicycle model, where the velocity and heading angle are bounded within $[0,1]$ and $[0, \frac{\pi}{2}]$, respectively. The control inputs are acceleration and angular velocity, where acceleration is bounded within $[-1,1]$ and angular velocity is bounded within $[-1,1]$. We consider one static circular obstacle, and the user-defined safety specification $(\phi_0)$ requires the vehicle to avoid collision with the obstacle.}


As shwon in \Cref{tab:sos_table_drive}, our proposed method can efficiently and robustly synthesize safety index for vehicle systems. 
As for this setting, a valid safety index needs $k_1 > 1$ according to ~\cite{zhao2021issa}. In our experiments with 1000 random seeds, the minimum synthesized $k_1$ is $1.0001$, which indicates good optimality of solutions of our proposed method.

\begin{table}[t]
\vspace{5pt}
\caption{The performance of safety index design via SOS programming with 1000 random seeds (Autonomous Driving).}
\vspace{-10pt}
\begin{center}
\begin{tabular}{c|c|c|c}
\toprule
 &  Time (s) & Validness (\%) & Variance \\
\hline
4-State Unicycle & 0.066 & 100 & 0.011 \\
\bottomrule
\end{tabular}
\end{center}
\vspace{-20pt}
\label{tab:sos_table_drive}
\end{table}

\section{CONCLUSIONS}
This paper proposed a framework for synthesizing the safety index for general control systems under control limits using sum-of-squares programming. Our approach leverages Positivstellensatz theorem to ensure the non-emptiness of safe control. The experimental results show that the synthesized safety index guarantees safety and our method is effective even in high-dimensional robot systems. \rebuttal{The proposed method is limited to white-box analytical dynamics. Future research could focus on safety index synthesize for black-box dynamics.}

\bibliographystyle{IEEEtran.bst}
\bibliography{reference}

\addtolength{\textheight}{-12cm}   











\end{document}